%%%%%%%% ICML 2025 EXAMPLE LATEX SUBMISSION FILE %%%%%%%%%%%%%%%%%

\documentclass{article}

% Recommended, but optional, packages for figures and better typesetting:
\usepackage{microtype}
\usepackage{graphicx}
\usepackage{subfigure}
\usepackage{booktabs} % for professional tables
\newcommand{\bechmarkcount}{6 }
\newcommand{\blia}{binary passive label inference attack }
\newcommand{\liat}{LIA-Threshold }
\newcommand{\liam}{LIA-Delta-Margin }

\newcommand{\census}{CENSUS }

\newcommand{\fashinmnist}{FashionMNIST}
\newcommand{\msr}{Big-Vul}
\usepackage{comment}
% hyperref makes hyperlinks in the resulting PDF.
% If your build breaks (sometimes temporarily if a hyperlink spans a page)
% please comment out the following usepackage line and replace
% \usepackage{icml2025} with \usepackage[nohyperref]{icml2025} above.
\usepackage{hyperref}

% Attempt to make hyperref and algorithmic work together better:

% Use the following line for the initial blind version submitted for review:
% \usepackage{icml2025}

% If accepted, instead use the following line for the camera-ready submission:
% \usepackage[accepted]{icml2025}
\usepackage[accepted]{neurips_2023}

% For theorems and such
\usepackage{amsmath}
\usepackage{amssymb}
\usepackage{mathtools}
\usepackage{amsthm}

% if you use cleveref..
\usepackage[capitalize,noabbrev]{cleveref}

%%%%%%%%%%%%%%%%%%%%%%%%%%%%%%%%
% THEOREMS
%%%%%%%%%%%%%%%%%%%%%%%%%%%%%%%%
\theoremstyle{plain}
\newtheorem{theorem}{Theorem}[section]

\newtheorem{lemma}[theorem]{Lemma}

\theoremstyle{definition}
\newtheorem{definition}[theorem]{Definition}

\theoremstyle{remark}

% Todonotes is useful during development; simply uncomment the next line
%    and comment out the line below the next line to turn off comments
%\usepackage[disable,textsize=tiny]{todonotes}
\usepackage[textsize=tiny]{todonotes}

% The \icmltitle you define below is probably too long as a header.
% Therefore, a short form for the running title is supplied here:
\icmltitlerunning{Passive Label Inference attack}

\begin{document}

\twocolumn[
\icmltitle{BLIA: Detect model memorization in binary classification model through passive Label Inference attack}

% It is OKAY to include author information, even for blind
% submissions: the style file will automatically remove it for you
% unless you've provided the [accepted] option to the icml2025
% package.

% List of affiliations: The first argument should be a (short)
% identifier you will use later to specify author affiliations
% Academic affiliations should list Department, University, City, Region, Country
% Industry affiliations should list Company, City, Region, Country

% You can specify symbols, otherwise they are numbered in order.
% Ideally, you should not use this facility. Affiliations will be numbered
% in order of appearance and this is the preferred way.
\icmlsetsymbol{equal}{*}

\begin{icmlauthorlist}
\icmlauthor{Mohammad Wahiduzzaman Khan}{yyy}
\icmlauthor{Sheng Chen}{yyy}
\icmlauthor{Ilya Mironov}{comp}
\icmlauthor{Leizhen Zhang}{yyy}
\icmlauthor{Rabib Noor}{yyy}

\end{icmlauthorlist}

\icmlaffiliation{yyy}{Department of Computer Science, University of Louisiana at Lafayette,LA,USA}
\icmlaffiliation{comp}{Meta, California, USA}

\icmlcorrespondingauthor{Mohammad Wahiduzzaman Khan}{mohammad-wahiduzzaman.khan1@louisiana.edu}
\icmlcorrespondingauthor{Firstname2 Lastname2}{first2.last2@www.uk}

% You may provide any keywords that you
% find helpful for describing your paper; these are used to populate
% the "keywords" metadata in the PDF but will not be shown in the document
\icmlkeywords{Machine Learning, ICML}

\vskip 0.3in
]

% this must go after the closing bracket ] following \twocolumn[ ...

% This command actually creates the footnote in the first column
% listing the affiliations and the copyright notice.
% The command takes one argument, which is text to display at the start of the footnote.
% The \icmlEqualContribution command is standard text for equal contribution.
% Remove it (just {}) if you do not need this facility.

%\printAffiliationsAndNotice{}  % leave blank if no need to mention equal contribution
%\printAffiliationsAndNotice{\icmlEqualContribution} % otherwise use the standard text.

\begin{abstract}
Model memorization has implications for both the generalization capacity of machine learning models and the privacy of their training data. This paper investigates label memorization in binary classification models through two novel passive label inference attacks (BLIA). These attacks operate passively, relying solely on the outputs of pre-trained models, such as confidence scores and log-loss values, without interacting with or modifying the training process. By intentionally flipping 50\% of the labels in controlled subsets, termed "canaries," we evaluate the extent of label memorization under two conditions: models trained without label differential privacy (Label-DP) and those trained with randomized response-based Label-DP. Despite the application of varying degrees of Label-DP, the proposed attacks consistently achieve success rates exceeding 50\%, surpassing the baseline of random guessing and conclusively demonstrating that models memorize training labels, even when these labels are deliberately uncorrelated with the features.
\end{abstract}

\section{Intoduction}
\label{intro}
Deep learning has emerged as a powerful solution for challenges across a variety of fields, from healthcare diagnostics~\cite{ahmad2023revolutionizing} to financial risk analysis~\cite{chen2022cryptocurrency}. In recent years, deep learning in particular has fueled breakthroughs in areas such as autonomous driving~\cite{grigorescu2020survey}, recommendation systems~\cite{loukili2022machine}, and online advertising~\cite{zhao2021dear}. With applications spanning nearly every industry, these technologies continue to reshape our daily lives. 

\noindent Despite the remarkable performance of deep learning, understanding \emph{how} these models make decisions remains a significant challenge. While they are expected to learn generalizable patterns from training data, researchers have observed that deep learning architectures can \emph{memorize} specific features or examples rather than extracting broader insights~\cite{vitaly01,carlini2023extractingtrainingdatadiffusion,carlini2021extractingtrainingdatalarge,tirumala2022memorization}. This phenomenon presents two main concerns: 
\begin{enumerate}
\item It raises privacy and security risks~\cite{carlini2021extracting} by potentially exposing sensitive training data. For instance, in a two-party setup, an advertising network with access to user profiles and context might inadvertently memorize the advertiser’s private conversion events (labels) during training. If this memorized information is then probed, it could expose sensitive details about specific campaigns or user conversions, thereby compromising advertiser confidentiality. 
\item It can compromise a model's generalization ability~\cite{vitaly01}. For instance, a vulnerability detection model may memorize the specific labels (e.g., whether a given source code is classified as vulnerable or non-vulnerable, denoted as 0 or 1) rather than learning a generalized pattern to distinguish between vulnerable and non-vulnerable code. 
\end{enumerate} 

Memorization in machine learning generally takes two forms~\cite{wei2024memorizationdeeplearningsurvey}: \textit{example level memorization} that measures how much a model remembers individual training examples, indicating whether it can reconstruct or predict specific data points from the training set, and \textit{model-level memorization} that evaluates the overall tendency of a model to memorize training data across its entire parameter space, reflecting its global reliance on memorized patterns rather than generalization. A significant amount of research has been done in the first category~\cite{shokri2017membership,carlini2022membership,choquette2021label}. In this work, we focus on the latter, seeking to determine whether a given model memorizes its labels during the training process through a passive label inference attack.
 
Existing research on label inference attacks has primarily examined the extent to which trained models can leak private labels under normal circumstances. 
~\citet{aggarwal2021label} proposed a set of label inference attacks using log-loss scores to infer the labels.~\citet{fu2022label} exposed label privacy risks in vertical federated learning (VFL) by demonstrating how the bottom model structure and gradient updates enable an adversary to infer sensitive labels, even beyond the training set.In \cite{malek2021antipodes} authors measure Label-DP using canaries with deliberately mislabeled examples and applying black-box adversarial attacks to estimate empirical privacy loss. It contrasts these empirical results with theoretical guarantees provided by mechanisms like PATE and ALIBI, assessing both practical resilience and alignment with formal DP bounds.
\noindent \textbf{Our contribution.} In this work, we investigate label memorization through label inference attacks in two distinct settings: 
\begin{itemize} 
\item \textbf{Without Label-DP:} We conduct binary passive label inference attacks without incorporating any label differential privacy (Label-DP) techniques. 
\item \textbf{With Label-DP (Randomized Response):} We adopt a widely used Label-DP method---randomized response (RR) to preserve label privacy during model training. 
\end{itemize} 
In both scenarios, the primary objective is to detect the memorization of deliberately mislabeled examples.
The remainder of this paper is organized as follows. In Section~\ref{sec:background}, we provide the background and key definitions. Section~\ref{sec:problem-definition} and Section~\ref{sec:architecture} outline the formal problem definition and the architecture of our proposed approach, respectively.Section~\ref{sec:plia} provides a detailed exposition of the two passive label inference algorithms employed for the label inference attacks presented in this study. Section~\ref{sec:experiments} presents our experimental setup and results, and Section~\ref{sec:conclusion} concludes the paper by summarizing key findings and suggesting directions for future research.

\section{Background and Definitions}
\label{sec:background}
In this paper, we focus solely on binary classification problems, where the inputs are represented in the form $(X, Y)$. Here, $X$ denotes the feature matrix of size $\mathbb{R}^{m \times n}$, and $Y$ represents the label vector of size $\mathbb{R}^{m \times 1}$. In this notation, $m$ indicates the number of input rows, while $n$ refers to the number of features.
\begin{definition}
\label{def:inj}
\textbf{Differential Privacy(DP):} An algorithm $\mathcal{A}$ is $(\epsilon, \delta)$-differentially private if for any two neighboring datasets $D$ and $D'$ differing in at most one element, and for any subset of outputs $S \subseteq \text{Range}(\mathcal{A})$ ~\cite{dwork2006calibrating,dwork2014algorithmic}:

\[
\Pr[\mathcal{A}(D) \in S] \leq e^{\epsilon} \Pr[\mathcal{A}(D') \in S] + \delta,
\]

where $\epsilon \geq 0$ controls the privacy loss, it is also referred to as the privacy budget and $\delta$ accounts for the probability of privacy violation.
\end{definition}
\begin{definition}
   \textbf{ Label Differential Privacy(Label-DP):}  
 A learning algorithm $\mathcal{A}$ satisfies $(\epsilon, \delta)$-label differential privacy if, for any two datasets $D = \{(x_i, y_i)\}_{i=1}^m$ and $D' = \{(x_i, y'_i)\}_{i=1}^m$ that differ only in the label of a single example $i$, and for any subset $S \subseteq \text{Range}(\mathcal{A})$, the following holds:

\[
\Pr[\mathcal{A}(D) \in S] \leq e^{\epsilon} \Pr[\mathcal{A}(D') \in S] + \delta,
\]

where $\epsilon \geq 0$ is the privacy budget controlling the sensitivity to label changes.

\end{definition}
\textbf{Randomized Response (RR):} Randomized Response is a privacy-preserving technique that introduces controlled randomness to individual responses, enabling truthful data collection while safeguarding privacy~\cite{warner1965randomized}.
\begin{definition}
\textbf{Randomized response (RR):}
Given a binary-valued input $x \in \{0, 1\}$, the Randomized Response mechanism outputs:

\[
\mathcal{M}(x) =
\begin{cases}
x & \text{with probability } p, \\
1 - x & \text{with probability } 1 - p,
\end{cases}
\]

where $p = \frac{e^\epsilon}{1 + e^\epsilon}$ ensures that the mechanism satisfies $\epsilon$-differential privacy, with $\epsilon \geq 0$ controlling the level of privacy. A higher $\epsilon$ reduces privacy, while a lower $\epsilon$ increases privacy.
\end{definition}

\textbf{Label Memorization in Supervised Tasks:}
Label memorization~\cite{feldman2020does} quantifies how much a training algorithm \( A \) relies on a specific example \( (x_i, y_i) \) in dataset \( D \) by measuring the difference in the model’s ability to correctly predict \( y_i \) when \( (x_i, y_i) \) is included versus when it is removed (\( D \setminus i \)). Unlike example memorization, which involves storing detailed information about \( x_i \), label memorization primarily reflects the model's dependency on \( y_i \), making it susceptible to label inference but not necessarily data reconstruction attacks.

\textbf{Label Inference Attacks:} Label inference attacks aim to predict the true labels of data samples in a machine learning model by exploiting model outputs, such as probabilities or loss values~\cite{aggarwal2021label}, often revealing sensitive information.
\begin{definition}
\textbf{Label Inference Attack:}  
Given a training dataset $D = \{(x_i, y_i)\}_{i=1}^m$ and a model $f \leftarrow A(D)$, a label inference attack estimates the label $y_i$ by exploiting the model's behavior. 

\[
\hat{y}_i = \mathcal{P}(f(x_i))
\]

Here, the attacker uses the model's outputs (e.g., probabilities, logits, or confidence scores) to infer $\hat{y}_i$. The success of the attack depends on the model's memorization of the original or corrupted labels.

\end{definition}

    \textbf{Passive Label Inference Attack:}  
A passive label inference attack aims to infer the true labels of training data by observing the outputs of a pre-trained model, such as confidence scores or predictions, without interacting with or modifying the training process. This contrasts with active label inference attacks, where the adversary actively influences the training process~\cite{fu2022label} (e.g., by introducing modified inputs or querying specific examples) to extract label information.

\section{Problem Definition and Setting}
\label{sec:problem-definition}
In this work, we aim to design a binary passive label inference attack(BLIA) to detect and evaluate a model's propensity for label memorization. The proposed attack leverages a two-setting framework to systematically assess the extent to which a model memorizes labels during training. Below, we outline these settings and their implications for detecting label memorization.

\subsection{Settings and Methodology}

\paragraph{Setting 1: Without Label Differential Privacy (Without Label DP).}  
In the first setting, the model is trained on a dataset where the labels of a subset(canaries) of examples are randomly flipped with a 50\% probability. This randomized flipping ensures that the labels in this subset bear no relation to the corresponding feature vectors. After training, a label inference attack is performed to determine whether the model has memorized these flipped labels. 

\paragraph{Setting 2: Label Differential Privacy (Label DP) Applied.}  
In the second setting, similar to the first setting we first flip the labels of a subset of examples with a 50\% probability; we refer to this subset as canaries. Next, label differential privacy is applied to the entire training dataset using the randomized response \cite{warner1965randomized} mechanism, with varying levels of privacy to control the extent of label protection. Finally, after the model has been trained, a label inference attack is performed.

\subsection{Formal Problem Definition}

Let $D = D_1 \cup D_2$ represent the dataset, where $D_2$ consists of a subset(canaries) of examples $\{(x_i, y_i)\}$ with labels $y_i$ intentionally flipped with 50\% probability. We denote the randomized version of $D_2$ as $D_2' = \{(x_i, y_i')\}$. The training dataset is then $D' = D_1 \cup D_2'$. The model $f$ is trained on $D'$ using a training algorithm $\mathcal{A}$.

For a given input $x_i \in D_2'$, the label inference attack predicts the label $\hat{y}_i$ based on the model's output:
\[
\hat{y}_i = \mathcal{P}(f(x_i))
\]

The success ratio of the label inference attack is defined as:
\begin{equation}
\text{Success Ratio (SR)} = \frac{\text{\# Correctly inferred labels}}{\text{\# Total labels in } D_2'}
\end{equation}
This formula is justified because it quantifies the effectiveness of the attack by measuring the proportion of correctly inferred labels out of the total labels targeted. This standard accuracy metric provides a direct assessment of the inference attack’s success in predicting the true labels.

\subsection{Analysis and Implications}

If the label inference attack achieves a success ratio greater than 50\%:
\begin{itemize}
    \item \textbf{Setting 1 (Without Label DP):} The result indicates that the model memorizes the labels in $D_2'$, as the flipped labels have no meaningful relationship with the feature vectors.
    \item \textbf{Setting 2 (Label DP Applied):} Even with Label DP, a success ratio above 50\% conclusively demonstrates label memorization, as the privacy-preserving mechanism fails to prevent memorization of random labels.
\end{itemize}

By comparing the attack success rates across these settings, we can quantitatively evaluate the impact of label differential privacy on mitigating label memorization.

\begin{lemma}
Let \( D' \) be a dataset where 50\% of the labels are randomly flipped, making \( y_i' \) independent of \( x_i \). If a label inference attack achieves success rate \( SR > 0.5 \), the model \( f \) memorizes labels in \( D' \).
\end{lemma}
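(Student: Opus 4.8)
The plan is to argue by contraposition: I would show that if the model $f$ does \emph{not} memorize the labels in $D'$, then any passive attack $\hat{y}_i = \mathcal{P}(f(x_i))$ must have expected success ratio exactly $1/2$, contradicting the hypothesis $SR > 0.5$. The pivot of the whole argument is the independence already granted in the statement: flipping each label with probability $1/2$ makes $y_i'$ a fair coin that is independent of $x_i$, so that $\Pr[y_i' = c \mid x_i] = 1/2$ for each $c \in \{0,1\}$. Moreover, because the flip coins are drawn independently across canaries, this independence will let me factor the relevant probabilities.

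First I would formalize the notion of ``no memorization'' operationally, leaning on the Feldman-style memorization definition recalled in Section~\ref{sec:background}: if the training algorithm places zero memorization weight on example $i$, then the prediction the model produces on $x_i$ does not depend on the particular realized value $y_i'$ used during training. Concretely, I would translate this into the statement $\hat{y}_i \perp y_i'$, i.e. the attacker's guess is a (possibly randomized) function of $x_i$ and of other examples' coins, but is statistically independent of the coin that generated $y_i'$.

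Second, using $\hat{y}_i \perp y_i'$ together with the uniformity of $y_i'$, I would compute
\[
\Pr[\hat{y}_i = y_i'] = \sum_{c \in \{0,1\}} \Pr[\hat{y}_i = c]\,\Pr[y_i' = c] = \tfrac{1}{2}\sum_{c} \Pr[\hat{y}_i = c] = \tfrac{1}{2}.
\]
Summing these indicators over the canary set $D'$ and dividing by its size $N$ shows the expected success ratio is exactly $1/2$. To lift this from an expectation to a statement about the observed $SR$, I would invoke a Hoeffding bound over the $N$ independent canary indicators, so the empirical $SR$ lies within $O(1/\sqrt{N})$ of $1/2$ with high probability; hence an $SR$ bounded away from $1/2$ forces the no-memorization assumption to fail, which is exactly the contrapositive of the lemma.

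The main obstacle I anticipate is pinning down the definition of memorization tightly enough that ``$\hat{y}_i$ is independent of $y_i'$'' is genuinely \emph{equivalent} to ``zero memorization,'' since the lemma's conclusion is qualitative and the contrapositive only closes if the two notions coincide rather than merely point in the same direction. A secondary subtlety is the finite-sample gap: on a finite canary set an $SR$ marginally above $0.5$ could arise from chance, so the cleanest reading interprets the $0.5$ threshold either in expectation or through the concentration guarantee above, which is what makes crossing it evidence of memorization rather than noise.
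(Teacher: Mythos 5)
Your proposal is correct and follows essentially the same route as the paper: the paper's proof is exactly your contrapositive, asserting that if $f$ does not memorize then $f(x_i)$ carries no information about $y_i'$, so any attack is limited to success $\leq 0.5$, contradicting $SR > 0.5$. The difference is one of rigor rather than strategy: the paper states the key step (``no memorization implies no information beyond random guessing'') as a bare assertion, whereas you make it operational via the independence $\hat{y}_i \perp y_i'$ and carry out the factorization $\Pr[\hat{y}_i = y_i'] = \tfrac{1}{2}$ explicitly. Your two additional points also identify real gaps in the paper's argument that its proof does not address: the paper never pins down a formal definition of memorization under which the implication ``no memorization $\Rightarrow$ independence'' is actually a theorem rather than a restatement, and it treats the observed $SR$ as if it were the population success probability, ignoring that on a finite canary set (e.g., $520$ canaries for CENSUS) a purely random guesser exceeds $0.5$ with probability close to one half, so an $SR$ of, say, $51.15\%$ is not by itself conclusive without the $O(1/\sqrt{N})$ concentration argument you supply.
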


\begin{proof}
Consider a dataset \( D' \) where labels \( y_i' \) are flipped with probability 0.5. For binary classification, the probability of correctly inferring a label without using memorized information is:
\[
P(\text{success}) = 0.5.
\]

Assume \( f \) does not memorize \( D' \). Then, \( f(x_i) \) provides no information about \( y_i \) beyond random guessing, limiting any inference attack to \( S \leq 0.5 \).

If \( SR > 0.5 \), \( f(x_i) \) must encode information about \( y_i' \), implying \( f \) memorizes labels. This contradicts the assumption that \( f \) does not memorize.

Thus, \( SR > 0.5 \) indicates label memorization.
\end{proof}

\section{Architecture of BLIA}
\label{sec:architecture}
The architecture of the binary label inference attack is illustrated in Figure~\ref{icml-blia}. The process involves two main entities: the Challenger and the Attacker. The Challenger begins by splitting the dataset \(D\) into two subsets, \(D_1\) and \(D_2\). The labels in \(D_2\) are randomized to form a new subset \(D_2'\), resulting in a mix of original and modified labels. The Challenger then trains a model \(f\) using the combined dataset \(D_1 \cup D_2'\), producing prediction scores. The Attacker, who has access to the model \(f\) and the randomized subset \(D_2'\)(only features \(x_2)\), aims to infer the flipped labels by exploiting the model's output.

\begin{figure}[ht]
\vskip 0.2in
\begin{center}
\centerline{\includegraphics[width=\columnwidth]{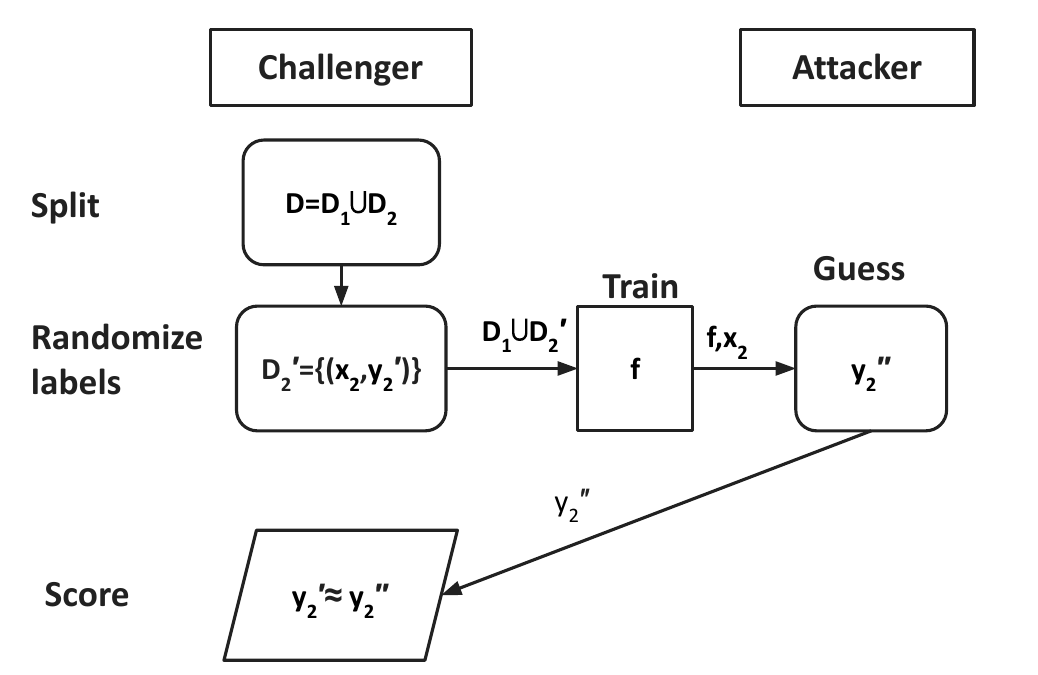}}
\caption{Binary label inference attack(BLIA) to detect model memorization}
\label{icml-blia}
\end{center}
\vskip -0.2in
\end{figure}
When Label-DP is applied, the architecture incorporates a randomized response mechanism to protect individual labels in the combined dataset \(D_1 \cup D_2'\). Specifically, each label in \(D_1 \cup D_2'\) is flipped with a certain probability determined by the privacy parameter \(\epsilon\), which controls the strength of differential privacy. Lower \(\epsilon\) values indicate stronger privacy protection, resulting in more frequent label flips. The model \(f\) is then trained on this perturbed dataset, reducing the attacker's ability to distinguish true labels from randomized labels.The success of the label inference attack indicates that the model retains information about flipped labels during the training process, demonstrating its tendency to memorize label perturbations introduced during training.

\section{Passive Label Inference Attack}
\label{sec:plia}
In this section, we describe our proposed passive label inference attacks, which is designed to exploit the outputs of pre-trained models to infer training labels. The attack focuses on utilizing the probabilities predicted by the model without interacting with or modifying the training process. 

\subsection{Attack Based on Threshold(\liat)}
This attack leverages the confidence scores produced by a model to infer the true labels of training examples. The adversary uses a pre-trained model to obtain predicted probabilities for each class and employs these probabilities to predict the labels based on a threshold, which can be fixed or dynamically computed.

The pseudocode for the probability-based label inference attack is presented in Algorithm~\ref{alg:binary_prob_attack}.

\begin{algorithm}[tb]
   \caption{Label Inference Attack using Fixed or Dynamic Thresholds}
   \label{alg:binary_prob_attack}
\begin{algorithmic}
   \STATE {\bfseries Input:} Predicted probabilities $\{\mathbf{p}_i = [p_{i,0}, p_{i,1}]\}_{i=1}^n$, Threshold type (``fixed'', ``mean'', ``median'')
   \STATE {\bfseries Output:} Inferred labels $\{\hat{y}_i\}_{i=1}^n$
   \STATE Initialize $\{\hat{y}_i\}_{i=1}^n \gets \emptyset$

   \STATE Extract confidence scores $\{c_i = p_{i,1}\}_{i=1}^n$
   \IF{\texttt{Threshold Type} = ``fixed''}
      \STATE $\tau \gets 0.5$
   \ELSIF{\texttt{Threshold Type} = ``mean''}
      \STATE $\tau \gets \text{mean}(\{c_i\}_{i=1}^n)$
   \ELSIF{\texttt{Threshold Type} = ``median''}
      \STATE $\tau \gets \text{median}(\{c_i\}_{i=1}^n)$
   \ENDIF

   \FOR{$i = 1$ {\bfseries to} $n$}
      \STATE $\hat{y}_i \gets 
      \begin{cases} 
      1, & \text{if } c_i \geq \tau, \\ 
      0, & \text{otherwise}.
      \end{cases}$
      \STATE Append $\hat{y}_i$ to $\{\hat{y}_i\}$
   \ENDFOR

   \STATE {\bfseries Return:} $\{\hat{y}_i\}_{i=1}^n$
\end{algorithmic}
\end{algorithm}

The threshold-based label inference attack is designed for binary classification tasks (\( K = 2 \)), where the model outputs confidence scores for two classes: \( p_{i,0} \) (class 0) and \( p_{i,1} \) (class 1). The adversary uses these scores to infer labels based on a threshold, which can be fixed or dynamically determined using the dataset's distribution.

\paragraph{Step-by-Step Processing:}
\begin{enumerate}
    \item \textbf{Adversary Setup:}  
    The adversary has access to a pre-trained binary classification model \( f \) that provides predicted probabilities \( \mathbf{p}_i = [p_{i,0}, p_{i,1}] \) for each input \( x_i \), where \( p_{i,0} + p_{i,1} = 1 \). 

    \item \textbf{Threshold Determination:}  
    The adversary determines the threshold \( \tau \) for label inference based on the selected method:
    \begin{itemize}
        \item \texttt{Fixed}: The threshold is set to \( \tau = 0.5 \).
        \item \texttt{Mean}: The threshold is computed as the mean of the confidence scores for the positive class, \( \tau = \text{mean}(\{p_{i,1}\}_{i=1}^n) \).
        \item \texttt{Median}: The threshold is computed as the median of the confidence scores for the positive class, \( \tau = \text{median}(\{p_{i,1}\}_{i=1}^n) \).
    \end{itemize}

    \item \textbf{Label Prediction:}  
    For each data point \( x_i \), the adversary infers the label \( \hat{y}_i \) as:
    \[
    \hat{y}_i =
    \begin{cases}
    1, & \text{if } p_{i,1} \geq \tau, \\
    0, & \text{otherwise}.
    \end{cases}
    \]

    \item \textbf{Output:}  
    After processing all data points, the adversary outputs the inferred labels \( \{\hat{y}_i\}_{i=1}^n \).
\end{enumerate}

\paragraph{Adversarial Insight:}  
This label inference attack uses a threshold-based approach to predict labels, allowing the adversary to adapt the inference strategy based on the dataset's distribution. The fixed threshold assumes a uniform decision boundary at \( \tau = 0.5 \), while dynamic thresholds (mean or median) leverage the underlying distribution of confidence scores to improve inference accuracy. By analyzing the inferred labels, the adversary can assess the model's tendency to leak information about its training data.

\subsection{Attack based on Delta-margin(\liam)}
This attack leverages the margins between the highest and second-highest confidence scores produced by a model, along with per-sample loss values, to infer the true labels of training examples. The adversary uses a pre-trained model to obtain predicted probabilities for each class and computes a score for each sample by combining the margin and loss, weighted by user-defined parameters. Labels are then inferred based on whether the computed score exceeds a threshold of zero.

The pseudocode for the delta-margin attack with loss adjustment is presented in Algorithm~\ref{alg:delta_margin_attack}.

\begin{algorithm}[tb]
   \caption{Delta-Margin Attack with Loss Adjustment}
   \label{alg:delta_margin_attack}
\begin{algorithmic}
   \STATE {\bfseries Input:} Predicted probabilities $\{\mathbf{p}_i = [p_{i,0}, p_{i,1}, \dots, p_{i,k-1}]\}_{i=1}^n$, Loss values $\{l_i\}_{i=1}^n$, Parameters $\alpha > 0$, $\beta > 0$
   \STATE {\bfseries Output:} Inferred labels $\{\hat{y}_i\}_{i=1}^n$
   \STATE Compute margins for each sample $i$:
   \[
   m_i = p_{i,1} - \text{SecondHighest}(\mathbf{p}_i),
   \]
   where $\text{SecondHighest}(\mathbf{p}_i)$ is the second largest probability in $\mathbf{p}_i$.
   \STATE Compute scores for each sample $i$:
   \[
   s_i = \alpha \cdot m_i - \beta \cdot l_i.
   \]
   \STATE Infer labels for each sample $i$:
   \[
   \hat{y}_i =
   \begin{cases}
   1, & \text{if } s_i > 0, \\
   0, & \text{otherwise}.
   \end{cases}
   \]
   \STATE {\bfseries Return:} $\{\hat{y}_i\}_{i=1}^n$
\end{algorithmic}
\end{algorithm}
The delta-margin attack is designed for binary classification tasks (\( K = 2 \)), where the model outputs confidence scores \( \mathbf{p}_i = [p_{i,0}, p_{i,1}] \) and per-sample loss values. The adversary combines the margin between the highest and second-highest probabilities with the loss values, weighted by parameters \( \alpha \) and \( \beta \), to compute a score for label inference.

\paragraph{Step-by-Step Processing:}
\begin{enumerate}
    \item \textbf{Adversary Setup:}  
    The adversary has access to a pre-trained binary classification model \( f \) that provides predicted probabilities \( \mathbf{p}_i = [p_{i,0}, p_{i,1}] \) for each input \( x_i \) and the corresponding loss values \( \{l_i\}_{i=1}^n \).

    \item \textbf{Margin and Score Computation:}  
    For each data point \( x_i \), the adversary computes:
    \begin{itemize}
        \item The margin: \( m_i = p_{i,1} - \text{SecondHighest}(\mathbf{p}_i) \).
        \item The score: \( s_i = \alpha \cdot m_i - \beta \cdot l_i \), where \( \alpha > 0 \) and \( \beta > 0 \) are weighting parameters.
    \end{itemize}

    \item \textbf{Label Prediction:}  
    The adversary infers the label \( \hat{y}_i \) as:
    \[
    \hat{y}_i =
    \begin{cases}
    1, & \text{if } s_i > 0, \\
    0, & \text{otherwise}.
    \end{cases}
    \]

    \item \textbf{Output:}  
    After processing all data points, the adversary outputs the inferred labels \( \{\hat{y}_i\}_{i=1}^n \).
\end{enumerate}

\paragraph{Adversarial Insight:}  
This attack exploits the difference between the top two confidence scores, adjusted by per-sample loss values, to infer labels. The use of parameters \( \alpha \) and \( \beta \) allows the adversary to balance the contribution of margins and losses. By analyzing the inferred labels, the adversary can evaluate the model's vulnerability to information leakage through confidence scores and loss values.

\section{Experiments}
\label{sec:experiments}
In this study, we evaluate a total of \bechmarkcount benchmarks for \blia, including \census~\cite{census_income_20}, \fashinmnist~\cite{xiao2017fashionmnistnovelimagedataset}, IMDB Sentiment Analysis~\cite{maas2011learning}, CIFAR-10, CIFAR-100~\cite{krizhevsky2009learning}, and a C/C++ code vulnerability dataset(\msr)~\cite{fan2020ac}. These benchmarks encompass both real-world datasets and academically standardized datasets frequently utilized in machine learning research. While datasets like FashionMNIST, CIFAR-10, and CIFAR-100 are inherently multi-class, we adapt them to binary classification settings to align with the scope of our work. Table~\ref{benchmark-table} provides an overview of benchmarks, training dataset sizes, the number of canaries, and model architectures used in the experiments. For each of the benchmarks we flipped 2\% of the training datasets label to generate the canaries for the attacker. Flipping only 2\% of the labels ensures minimal dataset perturbation, enhances the sensitivity of attacks to label memorization, reflects realistic privacy concerns, and avoids over-saturating the dataset with noise. Model architectures employed include Sequential, RNN, CNN, ResNet-18~\cite{resnet18}, and Transformer~\cite{fu2022linevul}, demonstrating a diverse set of methods tailored to the respective datasets. 
\begin{table}[t]
\caption{Overview of benchmarks, model architectures.}
\label{benchmark-table}
\begin{center}
\begin{small}
\begin{sc}
\begin{tabular}{lp{1.0cm}p{1.1cm}c}
\toprule
Benchmark     & Train size & Canaries size &  Architecture \\
\midrule
Census        & 26,048     & 520           & Sequential         \\
IMDB          & 25,000     & 500           & RNN                \\
FashionMNIST  & 60,000     & 1,200         & CNN                \\
CIFAR-10      & 50,000     & 1,000         & ResNet-18          \\
CIFAR-100     & 50,000     & 1,000         & ResNet-18          \\
\msr           &  150,908   & 3,018         & Transformer        \\
\bottomrule
\end{tabular}
\end{sc}
\end{small}
\end{center}

\end{table}

\subsection{BLIA without Label-DP}
In this section, we analyze the performance of label inference attacks on binary classification models in the absence of Label-DP.  Table~\ref{without-label-dp-acc} presents the accuracies for passive label inference attacks (LIA) without the application of Label Differential Privacy (Label-DP) across six benchmark datasets. The table reports three key metrics: training accuracy, \liat accuracy, and \liam accuracy. Notably, while the training accuracy remains consistently high for all benchmarks, the LIA threshold and delta margin accuracies exhibit considerable variability. 

As the canaries were flipped with a 50\% probability, achieving an accuracy greater than 50\% indicates successful inference. The results highlight varying levels of success across the benchmarks, demonstrating the effectiveness of label inference attacks under different settings.

\begin{table}[t]
\caption{Overview of passive label inference attack (LIA) accuracies without Label-DP.}
\label{without-label-dp-acc}
\begin{center}
\begin{small}
\begin{sc}
\begin{tabular}{lp{1.2cm}p{1.2cm}p{1.2cm}}
\toprule
Benchmark     & Train Accuracy & LIA Threshold Accuracy & LIA Delta Margin Accuracy \\
\midrule
CENSUS        & 84.72\%        & 51.73\%                & 51.15\%                   \\
IMDB          & 99.35\%        & 98.00\%                & 98.80\%                   \\
FashionMNIST  & 99.64\%        & 58.75\%                & 72.33\%                   \\
CIFAR-10      & 92.67\%        & 53.60\%                & 76.90\%                   \\
CIFAR-100     & 88.27\%        & 52.20\%                & 79.50\%                   \\
\msr      & 97.91\%        & 54.37\%                & 52.08\%                   \\
\bottomrule
\end{tabular}
\end{sc}
\end{small}
\end{center}
\end{table}

\noindent Across datasets, the results illustrate varying susceptibility to passive label inference attacks, with simpler benchmarks such as IMDB demonstrating substantially higher attack accuracies (e.g., 98.00\% for LIA threshold) compared to more complex datasets like \msr or CIFAR-100. This disparity highlights the significant role of dataset characteristics and inherent model architectures in determining vulnerability to label inference attacks. Interestingly, CIFAR-10, CIFAR-100 and FashionMNIST show a notable gap between LIA threshold and delta margin accuracies, with delta margin performing substantially better, emphasizing its potential as a more robust attack for evaluating label inference risks.

\subsection{BLIA with Label-DP}
This section investigates the performance of Binary Label Inference Attacks (BLIA) under the protection of Label Differential Privacy (Label-DP) across various benchmarks. The goal is to understand how Label-DP mitigates the effectiveness of BLIA by varying the privacy budget, \(\epsilon\), and analyzing its impact on attack accuracy and model performance.

Table~\ref{cifar100-lia-results} provides detailed results for the CIFAR-100 dataset, showing the train accuracy, \liat accuracy, and \liam across different \(\epsilon\) values. As \(\epsilon\) decreases, representing stronger privacy guarantees, a clear trade-off emerges: train accuracy declines while the success rates of LIA attacks (both threshold and delta margin) diminish. For instance, at \(\epsilon = \infty\), the delta margin accuracy reaches 79.50\%, indicating a significant vulnerability, whereas at \(\epsilon = 0.3\), the delta margin accuracy drops to 76.00\%, reflecting enhanced protection.

\begin{table}[t]
\caption{Results for CIFAR-100: Train Accuracy,  LIA Threshold Accuracy, and Delta Margin Accuracy for various $\epsilon$ values.}
\label{cifar100-lia-results}
\begin{center}
\begin{small}
\begin{sc}
\begin{tabular}{p{1.2cm}p{1.2cm}p{1.2cm}p{1.2cm}}
\toprule
Privacy budget $\epsilon$     & Train Accuracy & LIA Threshold Accuracy & LIA Delta Margin Accuracy \\
\midrule
Infinity       & 88.27               & 52.20                        & 79.50                      \\
4              & 88.55               & 52.00                        & 74.10                      \\
2              & 85.40               & 50.90                        & 54.40                      \\
1              & 82.86               & 51.30                        & 67.80                      \\
0.8            & 80.49               & 50.30                        & 66.70                      \\
0.5            & 78.80               & 51.10                        & 55.50                      \\
0.3            & 78.20               & 50.70                        & 76.00                      \\
\bottomrule
\end{tabular}
\end{sc}
\end{small}
\end{center}
\end{table}
Detailed results for other benchmarks, presented in a format similar to Table~\ref{cifar100-lia-results}, can be found in the Appendix section of this paper.

Figure \ref{fig:all_metrics} illustrates the relationship between train accuracy, delta margin attack success rate, and threshold-based attack success rate. This section provides an in-depth analysis of the results, highlighting the attack's efficacy and the relationship between privacy budgets and model vulnerability.
\begin{figure*}[ht]
    \centering
    \begin{tabular}{ccc}
        \includegraphics[width=0.3\textwidth]{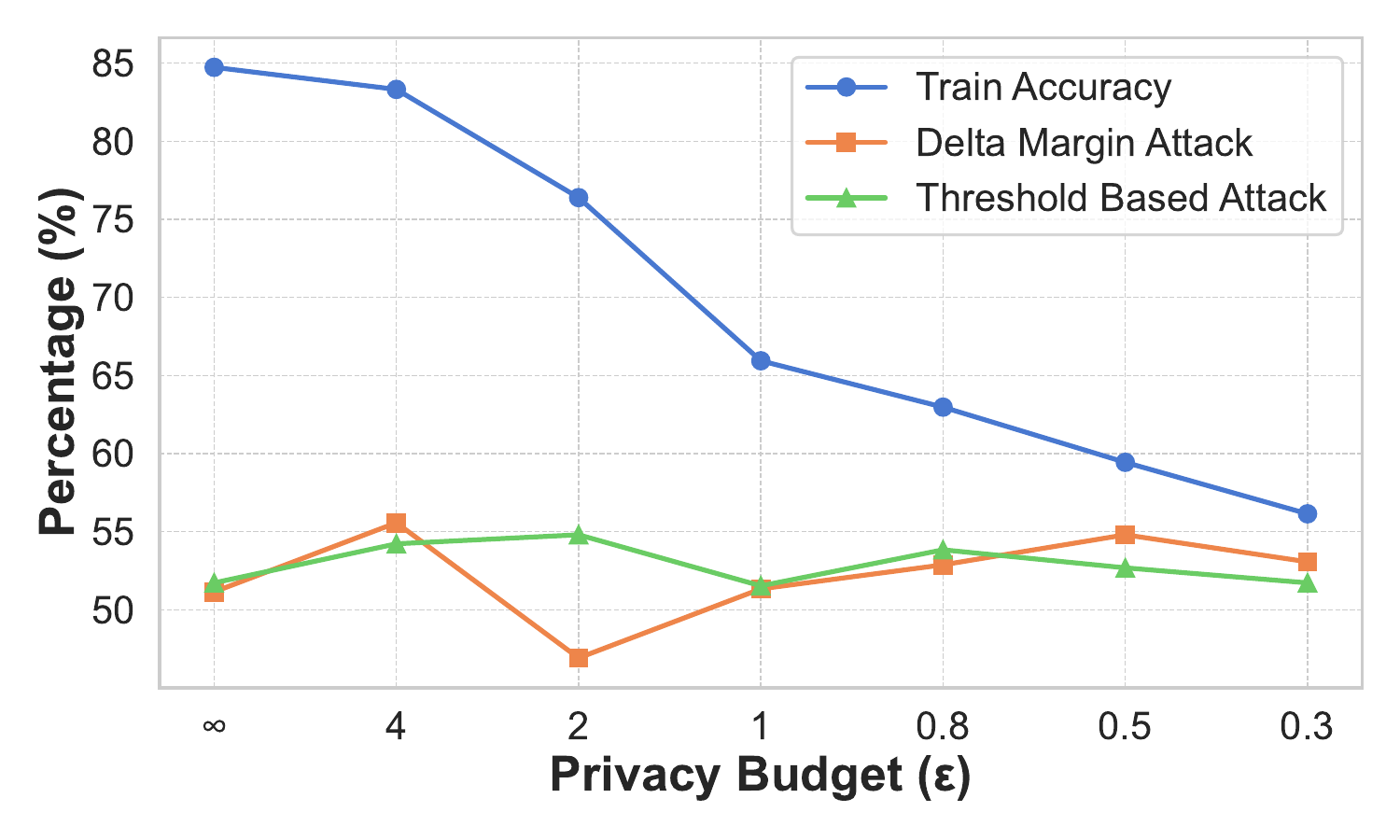} &
        \includegraphics[width=0.3\textwidth]{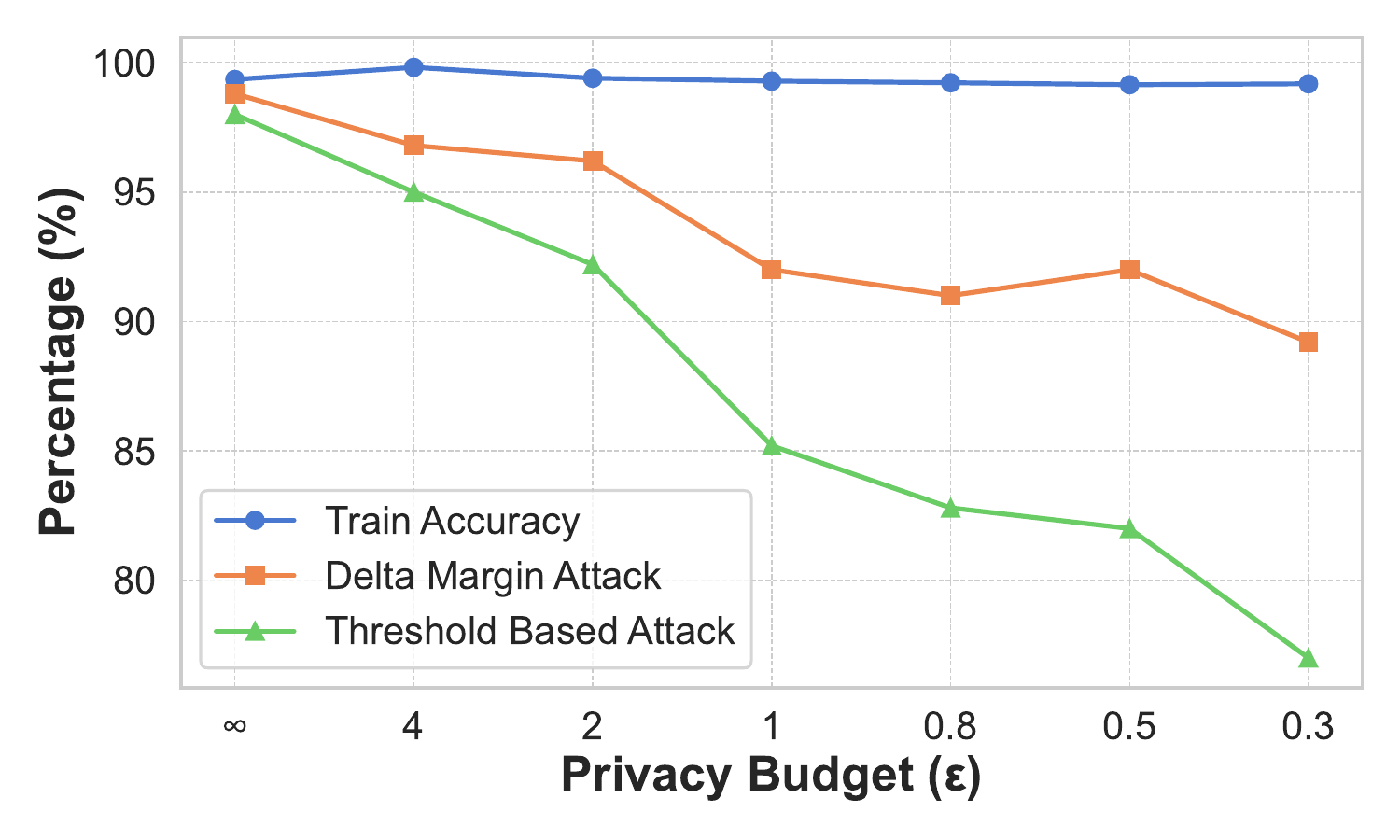} &
        \includegraphics[width=0.3\textwidth]{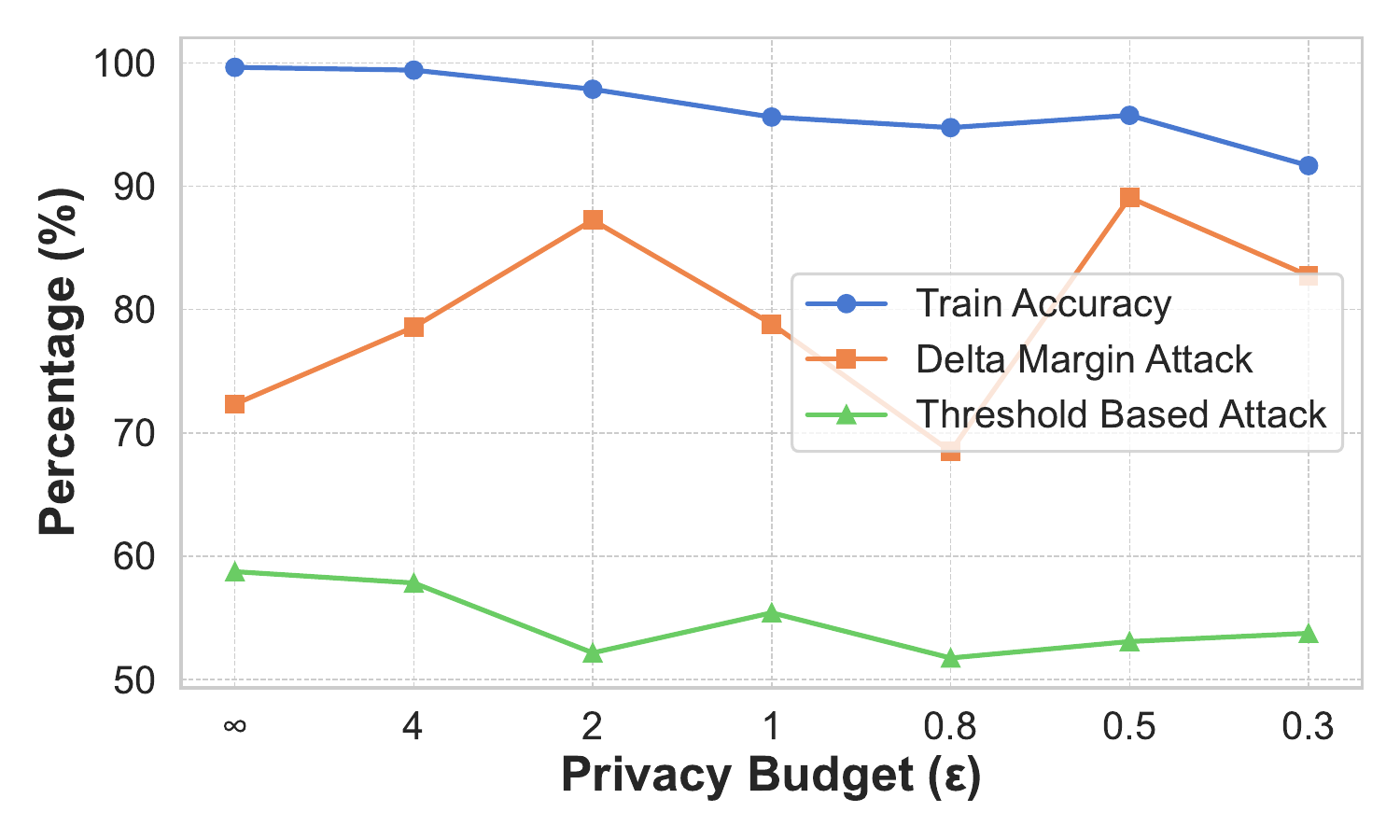} \\
        (a) Census & (b) IMDB & (c) FashionMNIST \\
        \includegraphics[width=0.3\textwidth]{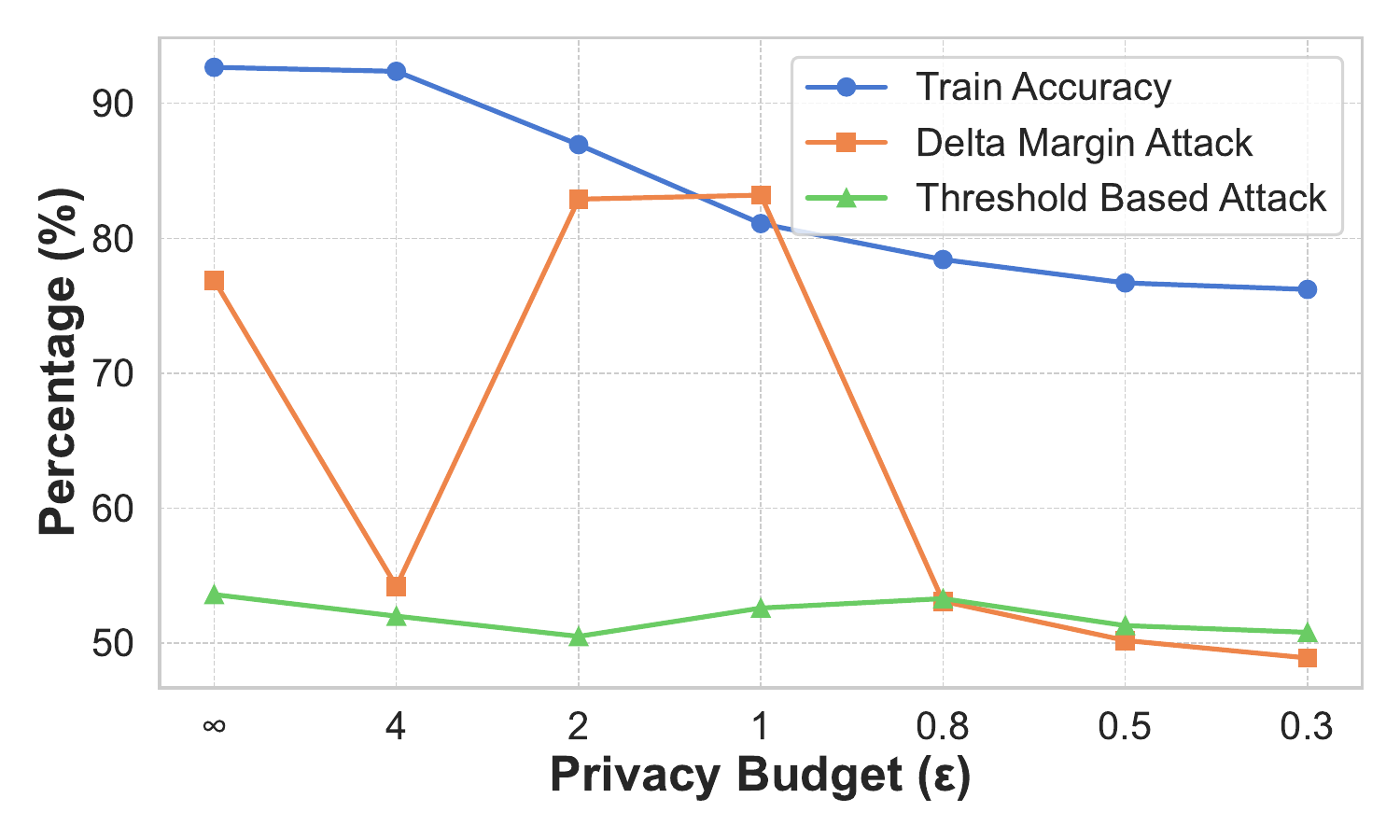} &
        \includegraphics[width=0.3\textwidth]{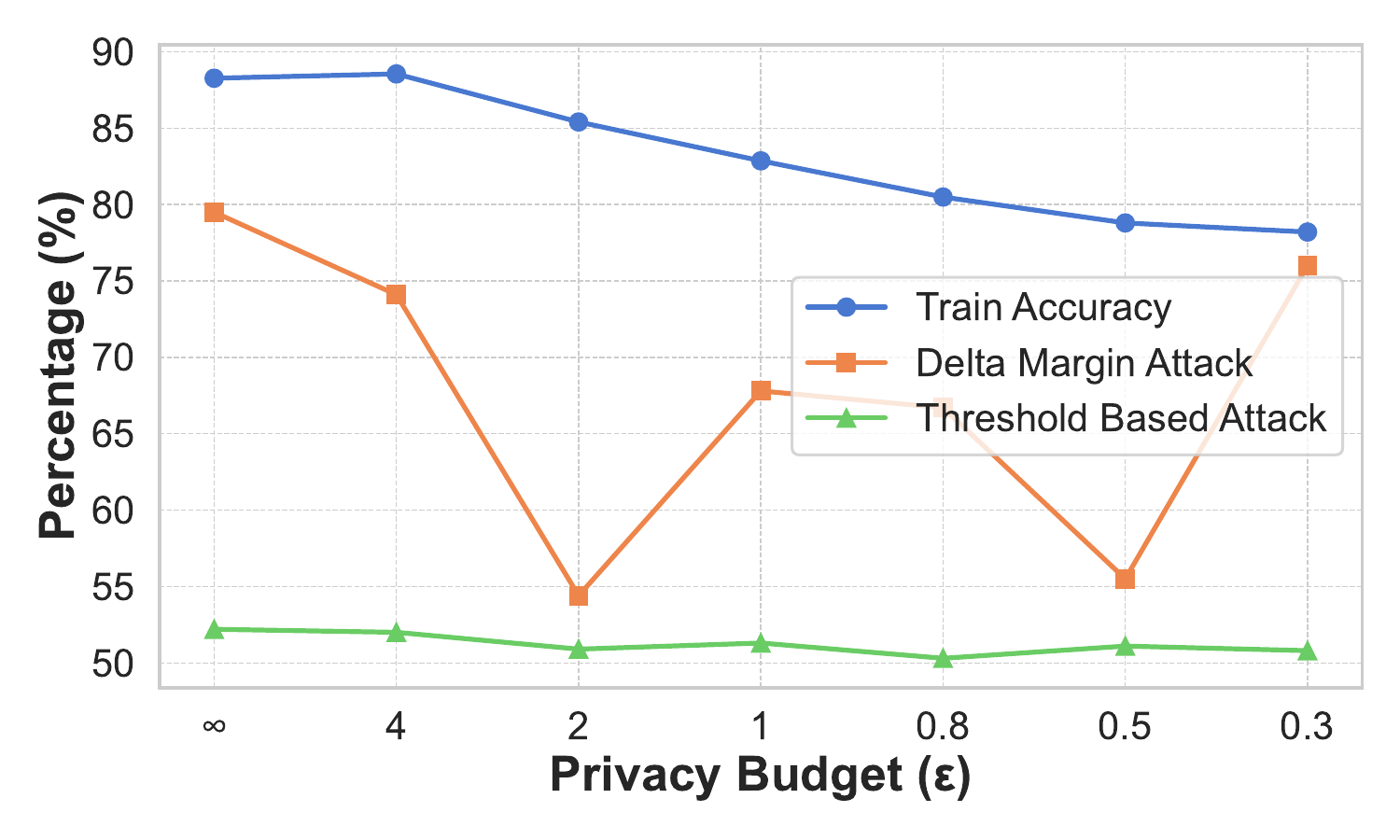} &
        \includegraphics[width=0.3\textwidth]{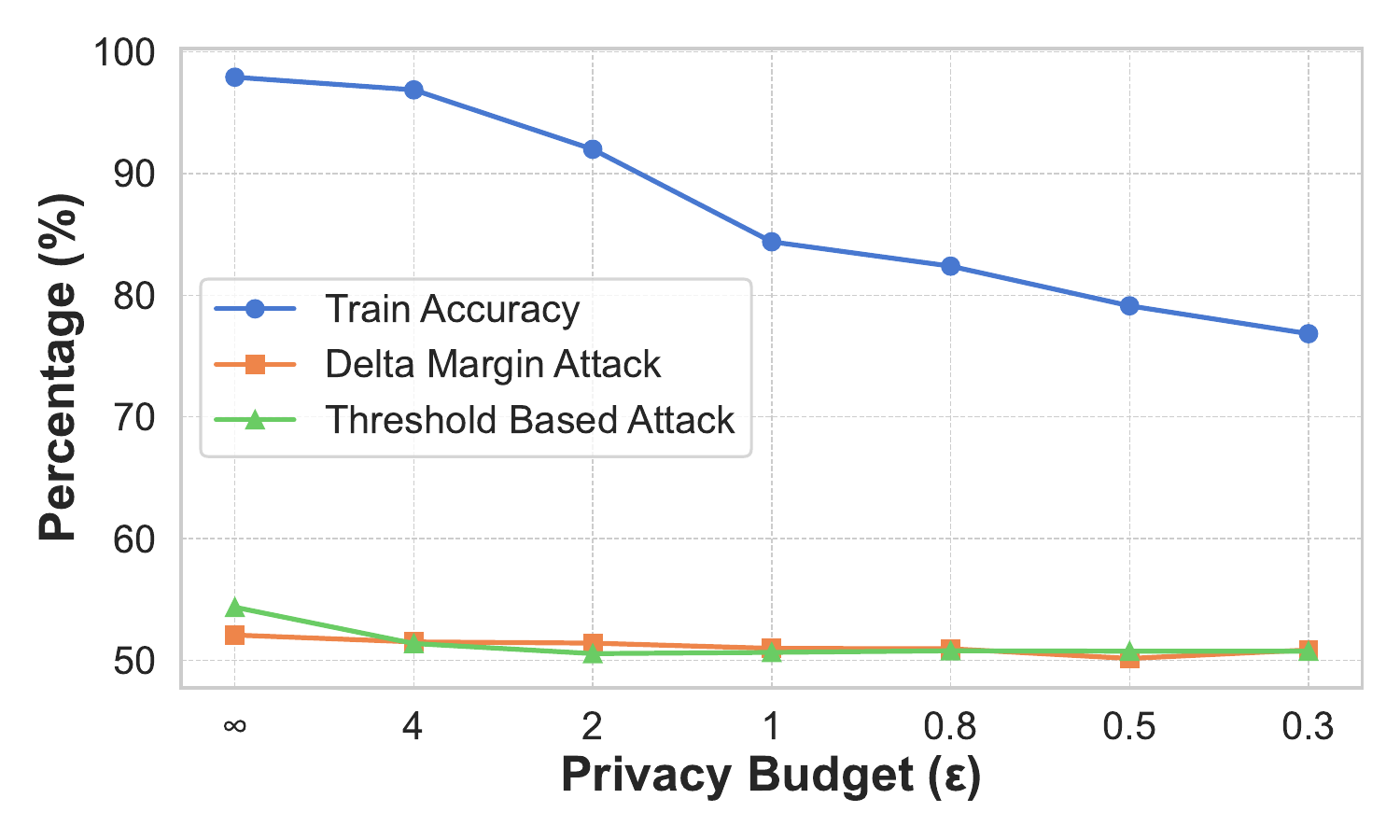} \\
        (d) CIFAR-10 & (e) CIFAR-100 & (f) MSR \\
    \end{tabular}
    \caption{Metrics vs. $\epsilon$ for all benchmarks. Panels (a)-(f) show train accuracy,threshold based attack success and delta margin attack success}
    \label{fig:all_metrics}
\end{figure*}

 Across all benchmarks, the delta margin attack exhibited a consistent trend of higher success rates compared to the threshold-based attack under varying privacy budgets. For higher privacy budgets (\( \epsilon = \infty \)), the attack success rates were notably higher, as the absence of noise in training allowed the attack to exploit model outputs more effectively. For instance, in the \texttt{IMDB} benchmark, the delta margin attack achieved a success rate of approximately 98.8\%, while in \texttt{CIFAR-100}, the rate reached 79.5\%.

As \( \epsilon \) decreased, introducing stronger noise for privacy preservation, the attack success rates generally declined. However, even for smaller privacy budgets such as \( \epsilon = 0.3 \), the delta margin attack maintained moderate effectiveness in several benchmarks. For instance, in \texttt{\fashinmnist}, it achieved a success rate of 82.75\%, suggesting that certain datasets remain more vulnerable under high privacy noise.

The performance of the label inference attack revealed distinct patterns across the datasets, indicating varying levels of vulnerability and resilience. Datasets with higher initial train accuracies under no privacy noise (\( \epsilon = \infty \)) generally exhibited higher attack success rates, suggesting a correlation between model confidence and susceptibility to attacks. For instance, benchmarks such as \texttt{IMDB} and \texttt{\fashinmnist} achieved near-perfect train accuracy with \( \epsilon = \infty \), making them more prone to successful label inference attacks.

As the privacy budget decreased (\( \epsilon \to 0.3 \)), the introduction of stronger noise led to a consistent drop in train accuracy and attack success rates across most datasets. However, the degree of this reduction varied. Datasets like \texttt{CENSUS} and \texttt{CIFAR-100} experienced more pronounced declines in train accuracy and attack success, indicating greater noise sensitivity. In contrast, datasets such as \texttt{\msr} and \texttt{IMDB} maintained relatively high train accuracies and attack success rates even under stringent privacy constraints, highlighting intrinsic dataset characteristics that make them less resilient to attacks.

Overall, the results suggest that datasets with simpler underlying patterns or higher class separability, such as \texttt{IMDB}, are more vulnerable to label inference attacks. Conversely, datasets with more complex structures or higher diversity, such as \texttt{CIFAR-10} and \texttt{CIFAR-100}, demonstrated increased resilience, particularly under lower privacy budgets.

\subsubsection{Comparison of Attack Techniques}

The \liam consistently outperformed the \liat variants across all benchmarks and privacy budgets. This superiority can be attributed to its ability to leverage confidence margins more effectively, particularly for larger privacy budgets (\( \epsilon \geq 1 \)).

\subsubsection{Impact of Privacy Budget}

The privacy budget \( \epsilon \) directly influenced both train accuracy and attack success rates. As \( \epsilon \) decreased, train accuracy declined, reflecting the trade-off between privacy and utility. For example, in the \texttt{CENSUS} benchmark, train accuracy dropped from 84.72\% (\( \epsilon = \infty \)) to 56.15\% (\( \epsilon = 0.3 \)). Concurrently, attack success rates diminished, showcasing the efficacy of privacy noise in mitigating vulnerabilities.

These results demonstrate that while smaller privacy budgets reduce attack success rates, certain datasets, such as \texttt{IMDB}, remain vulnerable even under strong privacy constraints. Among the evaluated attacks, the delta margin attack proved most effective, highlighting the need for advanced privacy-preserving techniques to safeguard against label inference vulnerabilities.

\section{Discussion}

Model memorization presents significant challenges for machine learning, impacting both generalization and privacy. Despite advances in privacy-preserving mechanisms, such as Label-Differential Privacy (Label-DP), the risk of label leakage through model outputs remains underexplored. Addressing this gap is critical, as it directly affects the secure deployment of machine learning systems in sensitive domains such as healthcare and finance. This work provides a systematic framework to evaluate label memorization, exposing vulnerabilities that persist even under strong privacy constraints.

While the notion of leveraging model outputs for label inference is conceptually simple, the results are significant for two reasons. First, they quantitatively demonstrate that model memorization persists across a wide range of architectures and datasets, even under randomized response-based Label-DP. Second, the success rates of our attacks exceeding 50\% provide empirical evidence that existing privacy mechanisms(Randomized Response) fail to mitigate label leakage effectively, revealing residual label memorization across a range of datasets and architectures and challenging the effectiveness of randomized response mechanisms.

While previous research has demonstrated that machine learning models can leak labels under specific conditions, our findings systematically reveal that \textbf{label memorization persists even under strong Randomized Response-based Label-DP}. Unlike~\cite{aggarwal2021label}, which uses log-loss values to infer labels without explicitly evaluating memorization, our work frames label inference as an empirical tool for detecting memorization. Similarly,~\cite{fu2022label} focus on label leakage in \textit{vertical federated learning}, where adversaries exploit gradient updates—whereas we show that memorization can be detected in a fully black-box setting using only confidence scores and log-loss values. Our study further supports concerns raised by~\cite{malek2021antipodes} regarding the limitations of \textit{Label-DP mechanisms} by demonstrating that even with strong privacy guarantees (low $\epsilon$), models still memorize labels beyond random guessing. 

\section{Limitations}

While this work presents a novel framework for passive label inference attacks and highlights their effectiveness, certain limitations should be acknowledged. First, the proposed methodology is primarily designed for binary classification tasks, and its generalization to multi-class scenarios remains unexplored. Extending the approach to multi-class settings would provide a broader understanding of its applicability.

Second, while this study demonstrates the vulnerabilities of models to passive label inference attacks under randomized response mechanisms, it does not evaluate more advanced privacy-preserving techniques such as PATE (Private Aggregation of Teacher Ensembles), ALIBI (Adaptive Label Inference-Based Intervention)\cite{malek2021antipodes}, or other differential privacy mechanisms\cite{abadi2016deep,ghazi2021deep} applied at the model or feature level. These advanced techniques could potentially offer stronger defenses against label memorization and warrant further investigation.

Lastly, the reliance on randomized label flipping to create canaries may not fully capture the complexity of more structured or correlated label perturbations. Such structured manipulations could influence the attack's success rate in real-world scenarios. Addressing these limitations would provide valuable insights and pave the way for future research in this domain.

\section{Conclusion}
\label{sec:conclusion}
This study highlights the vulnerability of binary classification models to label memorization through the proposed Binary Label Inference Attack (BLIA). By analyzing pre-trained model outputs, such as confidence scores and log-loss values, we demonstrated the feasibility of passive label inference attacks across multiple datasets, irrespective of the presence of label differential privacy (Label-DP). The findings underscore two critical points: models tend to memorize even deliberately corrupted labels, and existing Label-DP mechanisms, including randomized response, fail to completely mitigate this memorization.

These results hold significant implications for both privacy-preserving machine learning and the broader field of model generalization. The consistent success rates of the proposed attacks emphasize the need for more robust privacy-preserving techniques capable of balancing utility and privacy. Moreover, our framework can serve as a benchmark for evaluating the efficacy of emerging Label-DP methods.

Future research could explore extending these attacks to multi-class classification scenarios and evaluating their performance against advanced privacy-preserving algorithms. Additionally, investigating the interaction between data diversity, model architecture, and susceptibility to label memorization would provide deeper insights into enhancing model robustness.

\bibliography{main_paper}
\bibliographystyle{icml2025}

%%%%%%%%%%%%%%%%%%%%%%%%%%%%%%%%%%%%%%%%%%%%%%%%%%%%%%%%%%%%%%%%%%%%%%%%%%%%%%%
%%%%%%%%%%%%%%%%%%%%%%%%%%%%%%%%%%%%%%%%%%%%%%%%%%%%%%%%%%%%%%%%%%%%%%%%%%%%%%%
% APPENDIX
%%%%%%%%%%%%%%%%%%%%%%%%%%%%%%%%%%%%%%%%%%%%%%%%%%%%%%%%%%%%%%%%%%%%%%%%%%%%%%%
%%%%%%%%%%%%%%%%%%%%%%%%%%%%%%%%%%%%%%%%%%%%%%%%%%%%%%%%%%%%%%%%%%%%%%%%%%%%%%%
\newpage
\appendix

\section*{Appendix for BLIA: Detect Model Memorization in Binary Classification Model through Passive Label Inference Attack}
This appendix provides extended results for different privacy budgets ($\epsilon$) across multiple benchmarks, supplementing the findings reported in the main paper.

\section{Section \ref{sec:experiments} Results in Detail}

\begin{table}[t]
\caption{Results for Census: Train Accuracy, LIA Threshold Accuracy, and Delta Margin Accuracy for various $\epsilon$ values.}
\label{census-lia-results}
\begin{center}
\begin{small}
\begin{sc}
\begin{tabular}{p{1.2cm}p{1.2cm}p{1.2cm}p{1.2cm}}
\toprule
Privacy budget $\epsilon$     & Train Accuracy & LIA Threshold Accuracy & LIA Delta Margin Accuracy \\
\midrule
Infinity       & 84.72 & 51.73 & 51.15 \\
4              & 83.31 & 54.23 & 55.58 \\
2              & 76.39 & 54.81 & 46.92 \\
1              & 65.94 & 50.38 & 51.35 \\
0.8            & 62.98 & 50.77 & 52.88 \\
0.5            & 59.44 & 50.38 & 54.81 \\
0.3            & 56.15 & 50.38 & 53.08 \\
\bottomrule
\end{tabular}
\end{sc}
\end{small}
\end{center}
\end{table}
The results for the Census dataset, presented in Table~\ref{census-lia-results}, reveal a consistent decline in train accuracy as the privacy budget $\epsilon$ decreases. This reduction is expected due to the stronger noise introduced by Label-DP at smaller $\epsilon$ values. Interestingly, the Delta Margin Accuracy remains relatively stable compared to the LIA Threshold Accuracy, particularly at lower $\epsilon$, suggesting that the delta margin method is less sensitive to noise and provides more consistent attack performance.

Overall, the Delta Margin Attack achieves higher accuracies across most $\epsilon$ values, indicating its robustness. The relatively low LIA attack accuracies, even at $\epsilon = \infty$, highlight that the Census dataset is less vulnerable to label inference attacks compared to other benchmarks. This may be attributed to the dataset's characteristics and the simpler model architecture employed.
\begin{table}[t]
\caption{Results for IMDB: Train Accuracy, LIA Threshold Accuracy, and Delta Margin Accuracy for various $\epsilon$ values.}
\label{imdb-lia-results}
\begin{center}
\begin{small}
\begin{sc}
\begin{tabular}{p{1.2cm}p{1.2cm}p{1.2cm}p{1.2cm}}
\toprule
Privacy budget $\epsilon$ & Train Accuracy & LIA Threshold Accuracy & LIA Delta Margin Accuracy \\
\midrule
Infinity       & 99.35 & 98.00 & 98.80 \\
4              & 99.82 & 95.00 & 96.80 \\
2              & 99.40 & 92.20 & 96.20 \\
1              & 99.29 & 84.20 & 92.00 \\
0.8            & 99.22 & 82.40 & 91.00 \\
0.5            & 99.15 & 80.40 & 92.00 \\
0.3            & 99.18 & 75.60 & 89.20 \\
\bottomrule
\end{tabular}
\end{sc}
\end{small}
\end{center}
\end{table}
Table~\ref{imdb-lia-results} shows the results for the IMDB dataset. Train accuracy remains consistently high across all $\epsilon$ values, exceeding 99\%, reflecting the robustness of the RNN architecture on this benchmark. However, both the LIA Threshold and Delta Margin Attacks achieve significantly higher accuracies compared to other datasets, with Delta Margin Accuracy reaching 98.8\% at $\epsilon = \infty$.

This indicates a heightened vulnerability of the IMDB dataset to label inference attacks, likely due to the relatively simple decision boundary in binary sentiment classification tasks. The gap between Delta Margin and Threshold Accuracy is marginal, suggesting that both attack methods perform comparably well for this dataset. As $\epsilon$ decreases, attack performance diminishes, illustrating the efficacy of Label-DP.
\begin{table}[t]
\caption{Results for FashionMNIST: Train Accuracy, LIA Threshold Accuracy, and Delta Margin Accuracy for various $\epsilon$ values.}
\label{fashionmnist-lia-results}
\begin{center}
\begin{small}
\begin{sc}
\begin{tabular}{p{1.2cm}p{1.2cm}p{1.2cm}p{1.2cm}}
\toprule
Privacy budget $\epsilon$ & Train Accuracy & LIA Threshold Accuracy & LIA Delta Margin Accuracy \\
\midrule
Infinity       & 99.64 & 58.75 & 72.33 \\
4              & 99.41 & 57.83 & 78.58 \\
2              & 97.87 & 52.17 & 87.25 \\
1              & 95.61 & 55.42 & 78.83 \\
0.8            & 94.75 & 51.75 & 68.50 \\
0.5            & 95.75 & 52.75 & 89.08 \\
0.3            & 91.67 & 52.92 & 82.75 \\
\bottomrule
\end{tabular}
\end{sc}
\end{small}
\end{center}
\end{table}
The results for FashionMNIST, summarized in Table~\ref{fashionmnist-lia-results}, highlight an intriguing pattern. While train accuracy declines as $\epsilon$ decreases, the Delta Margin Attack exhibits significantly higher accuracies than the LIA Threshold Attack across all $\epsilon$ values. For example, at $\epsilon = 2$, Delta Margin Accuracy peaks at 87.25\%, demonstrating its effectiveness on this dataset.

The dataset's complexity, combined with the CNN's ability to learn intricate patterns, likely contributes to the resilience against LIA Threshold Attacks at lower $\epsilon$ values. However, the high attack success rates at larger $\epsilon$ values emphasize the need for privacy-preserving mechanisms for FashionMNIST.
\begin{table}[t]
\caption{Results for CIFAR-10: Train Accuracy, LIA Threshold Accuracy, and Delta Margin Accuracy for various $\epsilon$ values.}
\label{cifar10-lia-results}
\begin{center}
\begin{small}
\begin{sc}
\begin{tabular}{p{1.2cm}p{1.2cm}p{1.2cm}p{1.2cm}}
\toprule
Privacy budget $\epsilon$ & Train Accuracy & LIA Threshold Accuracy & LIA Delta Margin Accuracy \\
\midrule
Infinity       & 92.67 & 53.60 & 76.90 \\
4              & 92.37 & 47.40 & 54.20 \\
2              & 86.94 & 50.20 & 82.90 \\
1              & 81.09 & 48.60 & 83.20 \\
0.8            & 78.43 & 47.30 & 53.10 \\
0.5            & 76.69 & 51.00 & 50.20 \\
0.3            & 76.21 & 50.80 & 48.90 \\
\bottomrule
\end{tabular}
\end{sc}
\end{small}
\end{center}
\end{table}
Table~\ref{cifar10-lia-results} demonstrates that the CIFAR-10 dataset exhibits a sharp decline in train accuracy as $\epsilon$ decreases, which is expected given the increasing strength of Label-DP. Delta Margin Attack achieves the highest success rate of 83.20\% at $\epsilon = 1$, outperforming the LIA Threshold Attack at all $\epsilon$ values.
The dataset's inherent complexity, coupled with the ResNet-18 model's capacity to generalize well, likely contributes to the lower LIA accuracies compared to simpler benchmarks. These results emphasize that while CIFAR-10 is less vulnerable to label inference attacks than some other datasets, privacy guarantees provided by Label-DP remain critical.
\begin{table}[t]
\caption{Results for\msr: Train Accuracy, LIA Threshold Accuracy, and Delta Margin Accuracy for various $\epsilon$ values.}
\label{linevul-lia-results}
\begin{center}
\begin{small}
\begin{sc}
\begin{tabular}{p{1.2cm}p{1.2cm}p{1.2cm}p{1.2cm}}
\toprule
Privacy budget $\epsilon$ & Train Accuracy & LIA Threshold Accuracy & LIA Delta Margin Accuracy \\
\midrule
Infinity       & 97.91 & 54.37 & 52.09 \\
4              & 96.87 & 51.39 & 51.52 \\
2              & 91.99 & 50.00 & 51.42 \\
1              & 84.39 & 50.66 & 50.99 \\
0.8            & 82.39 & 49.40 & 50.96 \\
0.5            & 79.13 & 50.53 & 50.17 \\
0.3            & 76.84 & 50.33 & 50.86 \\
\bottomrule
\end{tabular}
\end{sc}
\end{small}
\end{center}
\end{table}
Table~\ref{linevul-lia-results} reveals that the Line-Vul dataset exhibits minimal differences in attack performance between Delta Margin and LIA Threshold Attacks across all $\epsilon$ values. This dataset consistently achieves high train accuracy, exceeding 97\% at $\epsilon = \infty$, while attack accuracies remain near 50\%, suggesting a limited vulnerability to label inference attacks.

The relative stability in attack success rates, even at large $\epsilon$ values, may be attributed to the complexity of the Transformer architecture and the dataset's unique characteristics. This suggests that model architecture plays a significant role in mitigating label inference risks alongside Label-DP.

\end{document}